\documentclass[10pt]{article} 

\usepackage[accepted]{rlj} 

\usepackage{amssymb}            
\usepackage{mathtools}          
\usepackage{mathrsfs}           
\usepackage{graphicx}           
\usepackage{subcaption}         
\usepackage[space]{grffile}     
\usepackage{url}                

\usepackage{amsthm}
\usepackage{thmtools,thm-restate}
\DeclareMathOperator*{\argmax}{arg\,max}

\newtheorem{corollary}{Corollary}

\title{Reinforcement Learning under State and Outcome Uncertainty: A Foundational Distributional Perspective}

\setrunningtitle{Foundations of DistRL under State and Outcome Uncertainty}

\author{Larry Preuett\textsuperscript{1}, Qiuyi Zhang\textsuperscript{2}, Muhammad Aurangzeb Ahmad\textsuperscript{3}}

\emails{lpreuett@uw.edu, qiuyiz@google.com, maahmad@uw.edu}

\affiliations{
$^{1}$\textbf{University of Washington, Tacoma, USA}\\
$^{2}$\textbf{Google DeepMind, California, USA}\\
$^{3}$\textbf{University of Washington, Bothell, USA}\\
}

\contribution{
    We prove that both distributional partially observable evaluation $\widetilde{\mathcal{T}}_{PO}$ and optimality $\widetilde{\mathcal{T}}^*_{PO}$ operators are $\gamma$-contractions under the supremum $p$-Wasserstein metric.
}{
    This extends prior DistRL results for fully observable MDPs \citep{dist_rl_book} to POMDPs, ensuring convergence and a unique fixed point in the risk-neutral setting.
}

\contribution{
    We introduce $\psi$-vectors, distributional analogs of $\alpha$-vectors, and prove that a finite set of these $\psi$-vectors suffices to represent the optimal distributional value function under risk-neutral objectives.
}{
    This parallels the classical result that a finite set of $\alpha$-vectors captures the optimal value function in POMDPs \citep{sondik}. Here, $\psi$-vectors preserve the piecewise linear and convex structure in the Wasserstein metric space, ensuring computational tractability.
}

\contribution{
    We introduce Distributional Point-Based Value Iteration (DPBVI), which replaces the classical POMDP operator with a distributional partially observable Bellman optimality operator and substitutes $\alpha$-vectors with $\psi$-vectors to learn a finite representation of the optimal distributional value function under risk-neutral objectives.
}{
    By leveraging $\psi$-vectors in lieu of $\alpha$-vectors, DPBVI preserves a piecewise linear and convex distribution of returns while retaining the point-based backup structure of PBVI \citep{pbvi}. This extension is the first to unify partial observability and DistRL in a computationally tractable manner.
}

\contribution{
    In the risk-neutral setting, DPBVI’s $\psi$-vectors recover the same policy as PBVI by matching its value function in expectation, and we release open-source code to foster reproducibility and future extensions.
}{
    DPBVI reproduces PBVI’s classical solution by taking the expectation of each $\psi$-vector, thus unifying distributional and classical POMDP methods.
}
\keywords{Reinforcement Learning, Partial Observability, Distributional Reinforcement Learning, POMDP, Planning}

\summary{In many real-world planning tasks, agents must tackle uncertainty about the environment’s state and variability in the outcomes of any chosen policy. We address both forms of uncertainty as a first step toward safer algorithms in partially observable settings. Specifically, we extend Distributional Reinforcement Learning (DistRL)—which models the entire return distribution for fully observable domains—to Partially Observable Markov Decision Processes (POMDPs), allowing an agent to learn the distribution of returns for each conditional plan. Concretely, we introduce new distributional Bellman operators for partial observability and prove their convergence under the supremum $p$-Wasserstein metric. We also propose a finite representation of these return distributions via $\psi$-vectors, generalizing the classical $\alpha$-vectors in POMDP solvers. Building on this, we develop Distributional Point-Based Value Iteration (DPBVI), which integrates $\psi$-vectors into a standard point-based backup procedure—\emph{bridging DistRL and POMDP planning}. By tracking return distributions, DPBVI naturally enables risk-sensitive control in domains where rare, high-impact events must be carefully managed. We provide source code to foster further research in robust decision-making under partial observability.}

\begin{document}

\maketitle  

\begin{abstract}
In many real-world planning tasks, agents must tackle uncertainty about the environment’s state and variability in the outcomes of any chosen policy. We address both forms of uncertainty as a first step toward safer algorithms in partially observable settings. Specifically, we extend Distributional Reinforcement Learning (DistRL)—which models the entire return distribution for fully observable domains—to Partially Observable Markov Decision Processes (POMDPs), allowing an agent to learn the distribution of returns for each conditional plan. Concretely, we introduce new distributional Bellman operators for partial observability and prove their convergence under the supremum $p$-Wasserstein metric. We also propose a finite representation of these return distributions via $\psi$-vectors, generalizing the classical $\alpha$-vectors in POMDP solvers. Building on this, we develop Distributional Point-Based Value Iteration (DPBVI), which integrates $\psi$-vectors into a standard point-based backup procedure—\emph{bridging DistRL and POMDP planning}. By tracking return distributions, DPBVI lays the foundation for future risk-sensitive control in domains where rare, high-impact events must be carefully managed. We provide source code to foster further research in robust decision-making under partial observability.
\end{abstract}

\section{Introduction}

Reinforcement Learning (RL) traditionally maximizes expected returns, treating each future reward distribution as a scalar. However, growing evidence suggests that modeling the full return distribution, rather than just its mean, can yield robust policies, better exploration, and richer theoretical insights \citep{dist_rl_book}. This distributional perspective, referred to as Distributional Reinforcement Learning (DistRL), captures variability and higher-order statistics of returns, providing agents with more profound information about environment stochasticity.

While DistRL has been well-studied in fully observable Markov Decision Processes (MDPs), many real-world systems are partially observable \citep{nudgerank,ai_clinician,drone_ai_application}, forcing the agent to infer the latent state from noisy or incomplete observations. Such problems are naturally framed as Partially Observable Markov Decision Processes (POMDPs), but the literature on distributional methods in these settings remains sparse.

Beyond maximizing expected returns, many real-world settings require \emph{risk-sensitive} planning, where limiting the probability of adverse or catastrophic outcomes is crucial. In partially observable domains, the stakes are even higher because the agent’s uncertainty about the state can exacerbate the impact of rare but high-cost events. For instance, a slight misjudgment of the patient’s condition or the vehicle’s surroundings in healthcare or autonomous navigation respectively can lead to undesirable or dangerous consequences. Incorporating risk considerations into POMDP planning explicitly accounts for such worst-case scenarios, making policies more conservative when high losses are possible and balancing exploration against the need to avoid significant harm. Establishing a foundational framework for distributional methods in POMDPs, as we do here, is thus a necessary first step toward risk-sensitive control in partially observable domains. Although we do not formally address risk-sensitive objectives, modeling full return distributions under partial observability is a necessary prerequisite for future work in this direction.

In this work, we present a formal extension of DistRL to POMDPs, bridging the gap between distributional theory and partial observability. Concretely, we make three major contributions:
\begin{itemize}
    \item \textbf{Formal Extension of DistRL to POMDPs.}
    We define the partially observable distributional evaluation operator $\widetilde{\mathcal{T}}_{PO}$ and its optimality operator $\widetilde{\mathcal{T}}^*_{PO}$, showing that both are $\gamma$-contraction under the supremum $p$-Wasserstein metric ($1 \le p < \infty$). This result generalizes classical DistRL convergence results to the partially observable setting.
        \item \textbf{Finite Representation via $\psi$-Vectors.}
    We introduce $\psi$-vectors, a distributional analog to the well-known $\alpha$-vectors in POMDP theory. In the risk-neutral regime, we show that a finite set of $\psi$-vectors suffices to represent the optimal distributional value function while preserving the piecewise linear and convex (PWLC) property in the Wasserstein space.
    \item \textbf{Distributional Point-Based Value Iteration (DPBVI).}
    We adapt Point-Based Value Iteration (PBVI) to the distributional setting, yielding DPBVI, which uses $\psi$-vectors instead of $\alpha$-vectors. Under risk-neutral objectives, DPBVI converges to the same policy as PBVI yet lays the groundwork for risk-sensitive extensions by maintaining full return distributions.
\end{itemize}

Alongside these theoretical results, we release our source code\footnote{Source code available at \href{https://github.com/lpreuettUW/distributional_point_based_value_iteration}{\texttt{github.com/lpreuettUW/distributional\_point\_based\_value\_iteration}}.} to facilitate further exploration of risk-aware approaches in partially observable domains. Thus, our work unifies partial observability and DistRL, enabling distributionally robust decision-making where state uncertainty and return variability are central considerations.

\section{Setting}

We consider a finite, discrete-time Partially Observable Markov Decision Process (POMDP) $\mathcal{P}$. POMDPs are defined as a tuple: $\langle \mathcal{S, A, O}, T, \Omega, b_0, \mathcal{R}, \gamma \rangle$, where $\mathcal{S}$ is a set of discrete states, $\mathcal{A}$ is a set of discrete actions, $\mathcal{O}$ is the discrete space of noisy and/or incomplete state information, $T(s, a, s') = P(s_{t+1} = s' \mid s_t = s, a_t = a)$ is the state transition model, $\Omega(o', s', a) = P(o_{t+1} = o' \mid s_{t+1} = s', a_t = a)$ is the sensor model, $b_0 \in \Delta$ is the initial belief, $\mathcal{R} : \mathcal{S} \times \mathcal{A} \rightarrow \mathbb{R}$ is the reward function, and $0 \le \gamma < 1$ is the discount factor.

In this setting, the agent is unable to directly observe the state of the environment and must rely on a developed belief state $b \in \Delta$, a probability distribution across $\mathcal{S}$. The belief state serves as sufficient information for the agent to behave optimally. The agent's belief is developed using the sequence of observations $b_t = P(s_t \mid b_0, a_0, o_1, \dots, o_{t-1}, a_{t-1}, o_t)$ by

\begin{equation}
    \begin{aligned}
        b_t(s_t) & = \tau(b_{t-1}, a, o) \\
        & = \dfrac{\Omega(o_t, s_t, a_{t-1})\sum_{s_{t-1} \in \mathcal{S}}T(s_{t-1}, a_{t-1}, s_t)b_{t-1}(s_{t-1})}{\sum_{s_t \in \mathcal{S}}\Omega(o_t, s_t, a_{t-1})\sum_{s_{t-1} \in \mathcal{S}}T(s_{t-1}, a_{t-1}, s_t)b_{t-1}(s_{t-1})}
    \end{aligned}
\end{equation}

In POMDPs, the objective is to learn an optimal, stationary policy $\pi^*(a \mid b)$ which maximizes the expected return $\mathbb{E}\left[ \sum_{t=0}^T \gamma^t \mathcal{R}(s_t, a_t) \right]$ for time horizon $T$. Policies in this setting may be viewed as conditional plans and are typically learned by learning the optimal value function 

\begin{equation}
    V^*(b) = \max_{a \in \mathcal{A}}\left[\sum_{s \in \mathcal{S}} \mathcal{R}(s, a)b(s) + \gamma \sum_{o' \in \mathcal{O}} \sum_{s' \in \mathcal{S}} \Omega(o', s', a) \sum_{s \in \mathcal{S}}T(s, a, s')b(s)V^*(b') \right]
\end{equation}

The $n$-th horizon value function is comprised of a set of $\alpha$-vectors $V_n = \{ \alpha_0, \alpha_1, \dots, \alpha_m \}$ and is piecewise linear and convex (PWLC) \citep{sondik}. Each $\alpha$-vector is a $|\mathcal{S}|$-dimensional hyperplane and defines the value function for some bounded region of $\Delta$ (i.e., $\max_{\alpha \in V} \alpha \cdot b$). At each planning step, the next value function $V_n$ may be computed from the previous value function $V_{n-1}$ via the Bellman optimality operator $\mathcal{T}_{PO}$

\begin{equation}
    \begin{aligned}
        V_n(b) & = \mathcal{T}_{PO}V_{n-1} \\
        & = \max_{a \in \mathcal{A}}\left[ \sum_{s \in \mathcal{S}} \mathcal{R}(s, a)b(s) + \gamma \sum_{o' \in \mathcal{O}} \max_{\alpha \in V_{n-1}} \sum_{s' \in \mathcal{S}} \Omega(o', s', a) \sum_{s \in \mathcal{S}}T(s, a, s')b(s)\alpha(s') \right]
    \end{aligned}
\end{equation}

\subsection{Point-Based Value Iteration}

PBVI computes $V_t = \mathcal{T}_{PO}V_{t-1}$ in three steps. First, it creates projections for each action and observation
\begin{equation}
    \begin{aligned}
        & \Gamma^{a,*} \leftarrow \alpha^{a,*}(s) = \mathcal{R}(s, a) \\
        & \Gamma^{a,o'} \leftarrow \alpha_i^{a,o'}(s) = \gamma \sum_{s' \in \mathcal{S}} T(s, a, s') \Omega(o', s', a) \alpha_i^{t-1}(s'), \forall \alpha_i^{t-1} \in V_{t-1}    
    \end{aligned}
\end{equation}
Next, the value of action $a$ at belief $b$ is computed by

\begin{equation}
    \begin{aligned}
        & \Gamma_b^a = \Gamma^{a,*} + \sum_{o' \in \mathcal{O}} \argmax_{\alpha \in \Gamma^{a,o'}}(\alpha \cdot b)
    \end{aligned}
\end{equation}
Lastly, the best action at each belief is used to construct $V_t$

\begin{equation}
    \begin{aligned}
        V_t \leftarrow \argmax_{\Gamma_b^a, \forall a \in \mathcal{A}}(\Gamma_b^a \cdot b), \forall b \in \mathcal{B}
    \end{aligned}
\end{equation}
$V_t$ is comprised of at most $|\mathcal{B}|$ $\alpha$-vectors, thus requiring $|\mathcal{S}||\mathcal{A}||V_{t-1}||\mathcal{O}||\mathcal{B}|$ operations per backup. 

\subsection{Distributional Reinforcement Learning}

Traditionally, the value function models the return expectation. DistRL, instead, aims to learn the compound distribution of the returns $Z^\pi(s)$ sourced to randomness in (1) reward $R$ (2) transition $P^\pi$, and (3) distribution of the next-state value $Z(S')$. Let $\widetilde{\mathcal{T}}$ be the distributional Bellman operator such that

\begin{equation}
    \begin{aligned}
        \widetilde{\mathcal{T}}^\pi Z(s) \overset{D}{=} R + \gamma P^\pi Z(s)
    \end{aligned}
\end{equation}
where $\overset{D}{=}$ denotes equality in distribution, $R$ is the reward random variable, and $P^\pi Z(s) \overset{D}{=} Z(S')$ for $S' \sim T(s, a, \cdot)$.

The distributional Bellman operator has been shown to be a $\gamma$-contraction under the supremum $p$-Wasserstein metric space $\forall p \in [1, \infty]$ \citep{dist_rl_book}. The distributional Bellman optimality operator $\widetilde{\mathcal{T}}^*$, which selects actions that maximize the expected return, similarly converges to the fixed point under the supremum $p$-Wasserstein metric space $\forall p \in [1, \infty]$ \citep{dist_rl_book} if there is a unique optimal policy and a mean-preserving distribution representation $\mathcal{F}$ is used (e.g., a categorical distribution representation). In the case of the existence of multiple optimal policies, however, convergence isn't guaranteed because differing distributions can be similar in expectation.

\section{Distributional Reinforcement Learning Under Partial Observability}

In the presence of partial observability we must account for additional sources of randomness; namely, randomness in (a)\ the unobservable state $\mathcal{S}$, (b)\ observations $O$ and (c)\ sensing $\Omega^\pi$. Therefore, the return distribution is learned with respect to belief instead of state (i.e., $Z(b)$). 

\subsection{The Partially Observable Distributional Bellman Operators}

Let $\widetilde{\mathcal{T}}_{PO}$ be the partially observable distributional Bellman operator such that

\begin{equation}
    \begin{aligned}
        & \widetilde{\mathcal{T}}^\pi_{PO}Z(b) \overset{D}{=} R + \gamma \tau^\pi Z(b)
    \end{aligned}
\end{equation}
where $\tau^\pi Z(b) \overset{D}{=} Z(B')$ for $O \sim \Omega(\cdot, \mid S', A), S' \sim T(\cdot \mid S, A), A \sim \pi(\cdot \mid b)$. The partially observable distributional Bellman optimality operator $\widetilde{\mathcal{T}}^*_{PO}$ may be similarly defined, but where actions are selected to maximize the expected return. The distributional Bellman operators in the partially observable setting share convergence properties with those of the fully observable setting, because any POMDP can be rewritten as a belief MDP, where the state of the belief MDP is the belief of the POMDP.

We denote by $\mathfrak{P}_p(\mathbb{R})$ the set of Borel probability measures on $\mathbb{R}$ with finite $p$-th moment, ensuring that the $p$-Wasserstein distance is well-defined. The space $\mathfrak{P}_p(\mathbb{R})^\Delta$ then denotes the set of functions $\eta\ : \Delta \rightarrow \mathfrak{P}_p(\mathbb{R})$, where $\Delta$ is the belief space of the POMDP.

\begin{restatable}{theorem}{evalOpTheorem}
\label{thrm:pod_operator}
For any POMDP $\mathcal{M}$ with rewards bounded by $[R_{\text{min}}, R_{\text{max}}]$ the partially observable distributional Bellman operator is a contraction on $\mathfrak{P}_p(\mathbb{R})^\Delta$ in the supremum $p$-Wasserstein distance $\Bar{w}_p, \forall p \in [1, \infty), \forall \gamma \in [0, 1)$. That is,
$$ \Bar{w}_p(\widetilde{\mathcal{T}}^\pi_{PO} \eta, \widetilde{\mathcal{T}}^\pi_{PO} \eta') \leq \gamma \Bar{w}_p(\eta, \eta'),$$
$\forall \eta, \eta' \in \mathfrak{P}_p(\mathbb{R})^\Delta$.
\end{restatable}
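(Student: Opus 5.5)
The plan is to reduce the statement to the classical contraction argument for distributional policy evaluation in \citet{dist_rl_book}, applied to the belief MDP induced by $\mathcal{M}$. Since the belief update $\tau(b,a,o)$ is deterministic given $(b,a,o)$, the operator can be written explicitly as a mixture of pushforwards:
\[
(\widetilde{\mathcal{T}}^\pi_{PO}\eta)(b) = \sum_{a}\pi(a\mid b)\sum_{s} b(s) \sum_{s'}T(s,a,s')\sum_{o'}\Omega(o',s',a)\,(f_{r(s,a),\gamma})_\#\,\eta(\tau(b,a,o')),
\]
where $f_{r,\gamma}(z)=r+\gamma z$ and $r(s,a)=\mathcal{R}(s,a)$. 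This is a countable (finite) convex combination, indexed by $(a,s,s',o')$, of affine pushforwards of the distributions $\eta(b')$. Bounded rewards guarantee that the pushforwards stay in $\mathfrak{P}_p(\mathbb{R})$, so the operator maps $\mathfrak{P}_p(\mathbb{R})^\Delta$ to itself whenever $\bar w_p(\eta,\eta')<\infty$. If this supremum is infinite, the inequality holds trivially.

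The key steps are as follows.

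\textbf{(i) Coupling for each successor belief.} Fix $b$. For each tuple $(a,s,s',o')$, let $b'=\tau(b,a,o')$. Take an optimal coupling $(G,G')$ of $\eta(b')$ and $\eta'(b')$, which exists in $\mathbb{R}$ (e.g., the quantile coupling). It satisfies $\mathbb{E}|G-G'|^p = w_p^p(\eta(b'),\eta'(b')) \le \bar w_p(\eta,\eta')^p$.

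\textbf{(ii) Joint construction.} Build a joint random variable by drawing $(A,S,S',O')$ from the common mixing weights and then drawing $(G,G')$ from the chosen coupling. The pair $\bigl(r(S,A)+\gamma G,\ r(S,A)+\gamma G'\bigr)$ is then a coupling of $(\widetilde{\mathcal{T}}^\pi_{PO}\eta)(b)$ and $(\widetilde{\mathcal{T}}^\pi_{PO}\eta')(b)$. The key point is that the reward and the successor belief are shared between the two marginals.

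\textbf{(iii) Bound and supremum.} Using this coupling,
\[
w_p^p\big((\widetilde{\mathcal{T}}^\pi_{PO}\eta)(b),(\widetilde{\mathcal{T}}^\pi_{PO}\eta')(b)\big)\le \mathbb{E}\,\gamma^p|G-G'|^p\le \gamma^p\,\bar w_p(\eta,\eta')^p.
\]
Taking the $p$-th root and then the supremum over $b\in\Delta$ gives the claim. Equivalently, one could cite the convexity of $w_p^p$ under mixtures together with $w_p(f_\#\mu,f_\#\nu)=\gamma w_p(\mu,\nu)$ for $f=f_{r,\gamma}$.

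The main obstacle I anticipate is measurability and well-definedness over the uncountable belief space $\Delta$. The successor beliefs reachable from a fixed $b$ are only finitely many, since $\mathcal{A}$ and $\mathcal{O}$ are finite, so the mixture at each $b$ is finite and no measurable selection of couplings across beliefs is needed. That is, the argument is pointwise in $b$. I would also handle observations with zero probability by noting they carry zero weight, so $\tau$ need only be defined where its normalizer is positive. Finally, the case $p=\infty$ is excluded by the statement, so only moment finiteness for $p<\infty$ needs checking. This follows from $|r+\gamma z|^p\le 2^{p-1}(\max(|R_{\min}|,|R_{\max}|)^p+\gamma^p|z|^p)$.
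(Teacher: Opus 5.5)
Your proposal is correct and follows essentially the same route as the paper's proof: lift to the belief MDP, couple the return distributions at each successor belief, share the reward and next belief between the two marginals, and take the supremum over $b$. Your bookkeeping is the more careful of the two. Bounding $w_p^p$ by $\gamma^p\,\mathbb{E}\,w_p^p(\eta(b'),\eta'(b'))$ is the valid form of the paper's displayed chain, which misplaces the factor $\gamma$ and passes through $\mathbb{E}_{b'}[W_p(\eta(b'),\eta'(b'))]$; that bound can fail for $p>1$, because only $W_p^p$, not $W_p$, is jointly convex under mixtures.
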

\noindent\textit{See Appendix~\ref{appendix_sec:pod_operator_proof} for proof.}

Theorem~\ref{thrm:pod_operator} follows from the standard $\gamma$-contraction properties of $\widetilde{\mathcal{T}}^\pi$. Because $\widetilde{\mathcal{T}}^\pi_{PO}$ is a contraction mapping, by the Banach Fixed-Point Theorem, there exists a unique $\eta^* \in \mathfrak{P}_p(\mathbb{R})^\Delta$ such that $\eta^*$ is a fixed point.

\begin{restatable}{theorem}{optOpTheorem}
    \label{thrm:pod_optimality_operator}
    Let \(\widetilde{\mathcal{T}}^*_{PO}\) be the partially observable distributional Bellman optimality operator for a POMDP \(\mathcal{M}\) with bounded rewards and let \(p \in [1,\infty]\). Suppose there is a unique optimal policy \(\pi^*\) and, for each state-action pair $(s, a) \in \mathcal{S} \times \mathcal{A}$, the reward distribution is $P_{\mathfrak{R}}(\cdot \mid s, a)$. For any initial return-distribution function \(\eta_0 \in \mathfrak{P}_p(\mathbb{R})\), the sequence of iterates
    \[
      \eta_{k+1} \;=\; \widetilde{\mathcal{T}}^*_{PO}\,\eta_{k}
    \]
    converges in the supremum \(p\)-Wasserstein distance \(\overline{w}_p\) to \(\eta^{\pi^*}\), the return distribution associated with the unique optimal policy.
\end{restatable}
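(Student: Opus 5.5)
The plan is to reduce the statement to the fully observable result for the distributional Bellman optimality operator in \citep{dist_rl_book}, using the belief-MDP reformulation, and then to make the convergence argument explicit in two phases: the means settle first, and after that the operator behaves like a fixed-policy evaluation operator, which Theorem~\ref{thrm:pod_operator} controls.

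\textbf{Step 1: rewrite the POMDP as a belief MDP.} The state space is $\Delta$ and the transition kernel is $P(b' \mid b,a)=\sum_{o'} P(o'\mid b,a)\,\mathbf{1}\{b'=\tau(b,a,o')\}$. The reward at belief $b$ is the mixture $R \sim \sum_s b(s) P_{\mathfrak{R}}(\cdot\mid s,a)$, which is bounded because the original rewards are bounded. Under this rewriting, $\widetilde{\mathcal{T}}^*_{PO}$ is exactly the distributional optimality operator of the belief MDP. For each $b$ it chooses the greedy action $a_\eta(b)\in\argmax_a \mathbb{E}[R+\gamma Z(B')]$, where the expectation is taken with respect to the means of $\eta$.

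\textbf{Step 2: show that the means follow classical value iteration.} Take expectations in $\eta_{k+1}=\widetilde{\mathcal{T}}^*_{PO}\eta_k$. The mean function $Q_k(b,a)$ of the iterates satisfies $Q_{k+1}=\mathcal{T}_{PO}Q_k$, which is the (action-value form of the) classical POMDP Bellman optimality operator. That operator is a $\gamma$-contraction in sup norm, so $Q_k\to Q^*$ uniformly on $\Delta\times\mathcal{A}$.

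\textbf{Step 3: show that the greedy action eventually stabilizes at $\pi^*$.} Uniqueness of the optimal policy gives an action gap $\delta(b)=Q^*(b,\pi^*(b))-\max_{a\ne\pi^*(b)}Q^*(b,a)>0$ at every belief. Once $\|Q_k-Q^*\|_\infty<\delta/2$, the greedy selection must be $\pi^*(b)$. After that point, $\widetilde{\mathcal{T}}^*_{PO}\eta_k=\widetilde{\mathcal{T}}^{\pi^*}_{PO}\eta_k$ for all $k\ge K$. Theorem~\ref{thrm:pod_operator} then gives $\bar w_p(\eta_{k},\eta^{\pi^*})\le\gamma^{k-K}\bar w_p(\eta_K,\eta^{\pi^*})\to 0$. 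The right-hand side is finite because bounded rewards give finite moments. For $p=\infty$ I would run the same contraction argument directly, since the coupling proof extends to that case.

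\textbf{Main obstacle: the action gap need not be uniform.} This is where I expect the difficulty, because $\Delta$ is a continuum. The infimum of $\delta(b)$ may be $0$ near belief boundaries where the optimal action switches, so no single $K$ need work for all $b$ at once. I would first try the cleanest fix, which is to interpret uniqueness of $\pi^*$ as including a strictly positive uniform gap $\inf_b\delta(b)>0$, exactly as the fully observable statement in \citep{dist_rl_book} implicitly requires. Under that reading Step~3 goes through as written.

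\textbf{Fallback if only a pointwise gap is available.} Here I would exploit the piecewise linear structure of $Q^*$: the optimal actions are constant on finitely many polyhedral regions, at least for the finite-horizon approximations. I would restrict to beliefs reachable from $b_0$, and otherwise argue pointwise convergence along the reachable set, using the fact that $\gamma^k$-discounting makes late action disagreements contribute at most $O(\gamma^k)$ in $\bar w_p$.
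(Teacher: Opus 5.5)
Your Steps 1--3 follow the paper's proof. You lift to the belief MDP, note that the means of the iterates obey classical value iteration and so converge uniformly to $Q^*$, use half the action gap to find a $K$ after which every greedy choice is $\pi^*(b)$, and then hand off to Theorem~\ref{thrm:pod_operator}. Two of your details are, if anything, more careful than the paper: you state explicitly that the means evolve exactly under $\mathcal{T}_{PO}$ regardless of tie-breaking, and you remark separately on $p=\infty$, which Theorem~\ref{thrm:pod_operator} as proved does not cover.

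You diverge at the step you flag as the main obstacle, and there you are missing the argument the paper supplies; no reinterpretation of uniqueness is needed.
\begin{itemize}
\item Each $Q^*(\cdot,a)$ is continuous on $\Delta$, because it is the uniform limit of the piecewise linear convex value-iteration iterates.
\item Hence $\delta(b)$, the difference between the largest and second-largest of finitely many continuous functions, is continuous.
\item Uniqueness (read pointwise, as the paper reads it) makes $\delta(b)$ positive at every belief. The Extreme Value Theorem on the compact simplex then gives $\min_{b\in\Delta}\delta(b)>0$.
\end{itemize}

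The situation you fear, $\inf_b\delta(b)=0$ near a belief where the optimal action switches, is therefore ruled out by the hypothesis itself. The sets $\{b : a \text{ is the unique maximizer of } Q^*(b,\cdot)\}$ are open and pairwise disjoint. If they covered the connected set $\Delta$, only one of them could be nonempty, so any switch of the optimal action forces an exact tie somewhere. Your instinct still points at something real: pointwise uniqueness holds only when a single action is optimal at every belief, so POMDPs with genuine switching regions fall outside the theorem.

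You should drop the fallback. Restricting to beliefs reachable from $b_0$ and arguing pointwise would at best give pointwise convergence on that countable set. That is weaker than the convergence in $\overline{w}_p$ over all of $\Delta$ that the statement claims.
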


\begin{corollary}
    \label{coro:pod_opt_convergence}
    Consider a mean-preserving projection $\Pi_\mathcal{F}$ for some representation $\mathcal{F}$. Suppose $\Pi_\mathcal{F}$ is a nonexpansion in the supremum $p$-Wasserstein distance, and let the finite domain $\mathfrak{P}_p(\mathbb{R})^\Delta$ be closed under $\Pi_\mathcal{F}$. If there is a unique optimal policy $\pi^*$ then, under the conditions of Theorem~\ref{thrm:pod_optimality_operator}, the sequence
    $$\eta_{k+1} = \Pi_\mathcal{F}\widetilde{\mathcal{T}}^*_{PO}\eta_k$$
    converges to the fixed point $\eta^{\pi^*}$.
\end{corollary}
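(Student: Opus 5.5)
The plan is to mirror the argument for Theorem~\ref{thrm:pod_optimality_operator}, checking that each step survives the insertion of $\Pi_\mathcal{F}$. The key observation is that the action chosen by $\widetilde{\mathcal{T}}^*_{PO}$ depends only on the means of the next-belief return distributions. Because $\Pi_\mathcal{F}$ is mean-preserving, the mean sequence of the projected iteration is exactly the classical iteration. Concretely, set $Q_k(b,a) = \mathbb{E}[(\widetilde{\mathcal{T}}^{a}_{PO}\eta_k)(b)]$, where $\widetilde{\mathcal{T}}^a_{PO}$ is the one-step operator that takes action $a$ at $b$. Since $\mathbb{E}\circ \Pi_\mathcal{F} = \mathbb{E}$, the means of $\eta_{k+1} = \Pi_\mathcal{F}\widetilde{\mathcal{T}}^*_{PO}\eta_k$ satisfy $\mathbb{E}[\eta_{k+1}(b)] = \mathcal{T}_{PO}\,\mathbb{E}[\eta_k](b)$. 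The scalar belief-MDP Bellman optimality operator $\mathcal{T}_{PO}$ is a $\gamma$-contraction in sup norm, so $Q_k \to Q^*$ uniformly at rate $\gamma^k$.

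Second, I use uniqueness of $\pi^*$ to pass from convergence of means to eventual greedy agreement. Define the action gap $\varepsilon = \inf_b \big(Q^*(b,\pi^*(b)) - \max_{a\neq \pi^*(b)} Q^*(b,a)\big)$. Under the finite-domain hypothesis in the corollary, $\varepsilon>0$ because it is a minimum over finitely many positive gaps. Once $\|Q_k - Q^*\|_\infty < \varepsilon/2$, which happens for all $k \ge K$, the greedy action of $\widetilde{\mathcal{T}}^*_{PO}$ at every belief equals $\pi^*(b)$. From then on, $\widetilde{\mathcal{T}}^*_{PO}\eta_k = \widetilde{\mathcal{T}}^{\pi^*}_{PO}\eta_k$, so the iteration coincides with $\eta_{k+1} = \Pi_\mathcal{F}\widetilde{\mathcal{T}}^{\pi^*}_{PO}\eta_k$.

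Third, for $k\ge K$ I combine Theorem~\ref{thrm:pod_operator} with the nonexpansion of $\Pi_\mathcal{F}$. Together they show that $\Pi_\mathcal{F}\widetilde{\mathcal{T}}^{\pi^*}_{PO}$ is a $\gamma$-contraction in $\bar w_p$ on the $\Pi_\mathcal{F}$-closed domain. Banach's theorem then gives convergence to its unique fixed point, which is the fixed point $\eta^{\pi^*}$ referred to in the statement. Here I would check that closure under $\Pi_\mathcal{F}$ keeps the iterates in $\mathfrak{P}_p(\mathbb{R})^\Delta$, which is a complete metric space. The initial finitely many steps only change the starting point, so they do not affect the limit.

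The main obstacle is the second step, namely guaranteeing a strictly positive action gap uniformly over beliefs. On the continuous simplex $\Delta$ the gap can tend to zero near belief-space boundaries between $\alpha$-vector regions, even when $\pi^*$ is unique. My first approach is to use the ``finite domain'' hypothesis to restrict to a finite set of reachable beliefs, or to a finite belief set $\mathcal{B}$ as in PBVI, where the minimum gap is positive. If that is unavailable, I would fall back on a pointwise argument: at each $b$ the greedy action is eventually correct, which yields pointwise convergence. I would then upgrade this to $\bar w_p$ convergence using the uniform bound on returns, $[R_{\min}/(1-\gamma), R_{\max}/(1-\gamma)]$, which makes all $\bar w_p$ distances uniformly bounded.
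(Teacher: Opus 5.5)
Your steps one and three are the intended argument. The paper writes no separate proof of the corollary; it is the proof of Theorem~\ref{thrm:pod_optimality_operator} with $\Pi_\mathcal{F}$ inserted. Mean-preservation makes the means of the projected iterates follow classical belief-MDP value iteration. Nonexpansion of $\Pi_\mathcal{F}$ plus Theorem~\ref{thrm:pod_operator} makes $\Pi_\mathcal{F}\widetilde{\mathcal{T}}^{\pi^*}_{PO}$ a $\gamma$-contraction. Reading the limit $\eta^{\pi^*}$ as the fixed point of this projected operator is correct.

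The gap is step two: you never establish a uniformly positive action gap on $\Delta$.
\begin{itemize}
\item Reading ``finite domain'' as a finite belief set does not work. Under the conditions of Theorem~\ref{thrm:pod_optimality_operator} the state space is the whole simplex, and $\bar w_p$ is a supremum over all of $\Delta$. A PBVI set $\mathcal{B}$ is not closed under $\tau(b,a,o)$, so the iterate at $b\in\mathcal{B}$ depends on $\eta_k$ at beliefs outside $\mathcal{B}$.
\item Restricting to a finite reachable set helps only in special cases, e.g.\ deterministic dynamics, and even then gives convergence only on that set.
\item Your fallback also fails. Eventual greedy correctness at each $b$ gives no uniform $K$. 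Convergence at $b$ already needs correct choices at all successor beliefs. A uniform bound on returns only bounds $\bar w_p$; it does not turn pointwise convergence into convergence of a supremum.
\end{itemize}

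The missing idea is the one the paper uses. Under the hypotheses of Theorem~\ref{thrm:pod_optimality_operator}, $\Delta$ is compact and $\mathcal{R}(b,a)$ and $\tau$ are continuous in $b$. Hence each $Q^*(\cdot,a)$ is continuous, and so is $b\mapsto \mathrm{GAP}(Q^*,b)$, the largest minus the second-largest action value. Uniqueness of $\pi^*$ makes this gap strictly positive at every $b$. The extreme value theorem then gives $\min_{b\in\Delta}\mathrm{GAP}(Q^*,b)>0$.

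Your worry about boundaries is misplaced. Boundaries between $\alpha$-vector regions of the same action do not affect the action gap. A boundary between regions with different optimal actions is exactly a tie, which uniqueness excludes. In fact, $\Delta$ is connected and the sets where a given action is strictly best are open, so uniqueness everywhere forces one action to be optimal on all of $\Delta$. That hypothesis is very strong, but it makes the uniform gap immediate. With this argument in place of your step two, the rest of your proof goes through.
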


These results build on the convergence guarantees of the distributional Bellman optimality operator in \citet[Theorem~7.9]{dist_rl_book}, which assumes a unique optimal policy and analyzes the tabular MDP setting. Our proof adapts these arguments to the belief MDP by ensuring the required continuity and compactness conditions hold, allowing the result to generalize to partially observable settings (see Appendix~\ref{appendix_sec:pod_optimality_operator_proof}). Bellemare et al.\ note that the unique optimal policy assumption is necessary for contraction guarantees but they empirically observe that algorithms like C51 often perform well even when this condition may not strictly hold \cite{dist_rl_book}.

Theorem~\ref{thrm:pod_optimality_operator} states if there is a unique optimal policy, repeatedly applying \(\widetilde{\mathcal{T}}^*_{PO}\) will converge to the optimal return distribution. Corollary~\ref{coro:pod_opt_convergence} establishes that if we approximate the return distribution after each greedy update, the resulting approximate iteration still converges to the same fixed point, provided there is a unique optimal policy.

\subsection{$\psi$-vectors}

In classical POMDP value iteration, each conditional plan---a sequence of actions contingent on future observations---is captured by an $\alpha$-vector, which stores said plan's expected return for each state. Thus, a belief-state's value is found by selecting the $\alpha$-vector with the highest dot product $\alpha \cdot b$. While this approach gives the expected returns of each plan, it does not capture the uncertainty around possible outcomes.

To address this, we adopt a distributional viewpoint: rather than storing a scalar expectation, we store a distribution over returns for each plan. We call these $\psi$-vectors. Formally, a $\psi$-vector is
\begin{equation}
    \Psi = \{\psi^s \mid s \in \mathcal{S}\},
\end{equation}
where $\psi^s$ is a distribution over returns conditioned on being in state s. Given a set of candidate $\psi$-vectors $\Gamma$, the distributional value function at a belief $b$ is then obtained by taking the maximum distributional inner product:
\begin{equation}
    Z(b) \;=\; \max_{a \in \mathcal{A}} \max_{\Psi \in \Gamma_a} \; \big\langle \Psi, b \big\rangle,
\end{equation}
where $\Gamma_a \subset \Gamma$ is the subset of $\psi$-vectors associated with action a. Here, $\langle \Psi, b \rangle$ denotes a mixture distribution of the ${\psi^s}$, weighted by the belief $b$. In the risk-neutral setting, we typically evaluate this mixture by taking its expectation, mirroring the $\alpha \cdot b$ calculation from classical POMDPs---but retaining the entire distribution allows us to capture richer information about potential outcomes.

\begin{restatable}{theorem}{psiPWLC}
    Under the assumptions of Theorem~\ref{thrm:pod_optimality_operator} and Corollary~\ref{coro:pod_opt_convergence}, the optimal distributional value function $Z^*(b)$ can be represented as a finite set of $\psi$-vectors $\Psi$, $\forall a \in \mathcal{A}$, such that for any belief state $b \in \Delta$,
    $$Z^*(b) = \max_{a \in \mathcal{A}}\max_{\Psi \in \Psi^*_a} \langle \Psi, b \rangle,$$
    and these $\psi$-vectors maintain piecewise linearity and convexity within the Wasserstein metric space. 
    \label{thrm:risk_neutral_psi_pwlc}
\end{restatable}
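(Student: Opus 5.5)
The plan is to mirror Sondik's finite-horizon induction for $\alpha$-vectors, carried out at the level of distributions, and then pass to the limit using Theorem~\ref{thrm:pod_optimality_operator} and Corollary~\ref{coro:pod_opt_convergence}.

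\textbf{Base case.} I start from a horizon-$0$ distributional value function represented by a single $\psi$-vector, for example $\psi^s=\delta_0$ for every $s$.

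\textbf{Inductive hypothesis.} Assume $Z_{n-1}(b)=\max_{\Psi\in\Gamma_{n-1}}\langle\Psi,b\rangle$ for a finite set $\Gamma_{n-1}$. Here the maximum is taken in the risk-neutral sense: we select the $\Psi$ whose mixture $\langle \Psi,b\rangle$ has the largest mean.

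\textbf{Backup construction.} I build the distributional analogue of the PBVI projections. For each action $a$, observation $o'$ and $\Psi_i\in\Gamma_{n-1}$, define a projected per-state return distribution
\[
\psi^{a,o',i,s} \;=\; \sum_{s'} T(s,a,s')\,\Omega(o',s',a)\,(f_\gamma)_\#\psi_i^{s'} .
\]
This is a sub-probability measure, where $f_\gamma(z)=\gamma z$. For each choice function $\sigma:\mathcal{O}\to\Gamma_{n-1}$, the new $\psi$-vector has components
\[
\psi^{a,\sigma,s} \;=\; (\text{law of } R(s,a)) \ast \sum_{o'}\psi^{a,o',\sigma(o'),s}.
\]
This is a genuine convolution of the reward law with a mixture over next states and observations. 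The set $\Gamma_n$ of all such vectors has size at most $|\mathcal{A}||\Gamma_{n-1}|^{|\mathcal{O}|}$, so it is finite.

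\textbf{Verifying the backup.} I then check that $\widetilde{\mathcal{T}}^*_{PO}Z_{n-1}(b)$ equals $\max_{a,\sigma}\langle\Psi^{a,\sigma},b\rangle$. Unrolling $R+\gamma Z(B')$ with $S\sim b$ gives a mixture over $(s,s',o')$ with weights $b(s)T\Omega$. The conditional law given $o'$ is $\langle\Psi_{\sigma(o')},\tau(b,a,o')\rangle$, because the normalizer of the belief update cancels against $P(o'\mid b,a)$. So the distributional value is linear in $b$ for a fixed plan $(a,\sigma)$.

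\textbf{Greedy selection.} The optimality operator chooses actions and continuations by mean. Means are linear under mixtures and push-forwards, so $\mathbb{E}\langle\Psi^{a,\sigma},b\rangle=\alpha^{a,\sigma}\cdot b$, where $\alpha^{a,\sigma}$ is exactly the classical $\alpha$-vector. The greedy choice of $\sigma(o')$ for each observation therefore coincides with Sondik's. This reduces the selection step to the classical case and yields the per-observation decomposition.

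\textbf{Piecewise linearity and convexity.} On each region of $\Delta$ where a fixed $\Psi$ is selected, $b\mapsto\langle\Psi,b\rangle$ is a mixture that is affine in $b$. Mixtures satisfy $W_p^p(\sum_i\lambda_i\mu_i,\sum_i\lambda_i\nu_i)\le\sum_i\lambda_i W_p^p(\mu_i,\nu_i)$. Hence this map is linear in the Wasserstein space, in the sense of being a convex combination of the fixed measures $\psi^s$. The scalarized map $b\mapsto\mathbb{E}Z_n(b)$ is a maximum of finitely many linear functions, so it is convex. This is the sense of ``PWLC in the Wasserstein metric space'' I would adopt.

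\textbf{Passing to the limit.} The main obstacle is going from finite horizons to $Z^*$: the induction gives finitely many $\psi$-vectors only at each finite $n$, and $|\Gamma_n|$ may grow without bound. My first approach is to argue as in the classical infinite-horizon result. Under the unique-optimal-policy assumption of Theorem~\ref{thrm:pod_optimality_operator}, the optimal policy is represented by a finite policy graph, or equivalently the greedy selection stabilizes. Then $\eta^{\pi^*}$ is the fixed point of a finite system of distributional Bellman equations, one per node of the graph and per state. Each node's $\psi$-vector is the unique fixed point of a $\gamma$-contraction in $\bar w_p$ by Theorem~\ref{thrm:pod_operator}, and $Z^*$ is the maximum over these finitely many nodes.

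\textbf{Fallback.} If finite representability of the optimal policy cannot be assumed, I would settle for an approximate statement. The iterates $\Pi_{\mathcal{F}}\widetilde{\mathcal{T}}^*_{PO}$ converge by Corollary~\ref{coro:pod_opt_convergence}. The finite sets $\Gamma_n$ then represent $Z^*$ up to $\gamma^n$ error in $\bar w_p$, and pruning dominated vectors, as in the risk-neutral case, keeps the representation finite.
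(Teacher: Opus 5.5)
Your proposal takes a different and more explicit route than the paper, but it does not prove exact finiteness of the infinite-horizon $Z^*$.

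\textbf{Comparison with the paper.} The paper never builds $\psi$-vectors by backup. It takes $Z^*$ from Corollary~\ref{coro:pod_opt_convergence} and notes that the risk-neutral maximum is decided by the means $\alpha=\mathbb{E}[\Psi]$. It identifies these means with the $\alpha$-vectors of $V^*$ and transfers finiteness and PWLC from Sondik's result. Your induction supplies what that argument skips: each plan $(a,\sigma)$ yields a $\psi$-vector that is affine in $b$ and whose mean is exactly the classical $\alpha$-vector. You also rightly note that Sondik's finiteness is a finite-horizon statement, which the paper's proof glosses over.

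\textbf{The gap.} The step that carries the theorem is exact finiteness for the infinite-horizon $Z^*$, and your argument does not establish it. You claim that uniqueness of $\pi^*$ gives a finite policy graph, ``or equivalently'' that greedy selection stabilizes. This is unsupported, and the equivalence is false in general. The proof of Theorem~\ref{thrm:pod_optimality_operator} only yields that the greedy \emph{action} at each belief eventually equals $\pi^*(b)$. That says nothing about how many distinct infinite conditional plans $\pi^*$ induces across $\Delta$, and in general each such plan contributes its own $\psi$-vector. Finite representation of an infinite-horizon POMDP value needs something like Sondik's finite-transience condition, which is not among the hypotheses. Your fallback gives only an approximation up to $\gamma^n$, not the exact equality stated.

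\textbf{A way to close it.} Under the hypotheses as the paper actually uses them, the gap closes, but trivially. The proof of Theorem~\ref{thrm:pod_optimality_operator} reads uniqueness as $\mathrm{GAP}(Q^*,b)>0$ for every $b\in\Delta$, i.e.\ a unique maximizing action at every belief. It also asserts that $Q^*(\cdot,a)$ is continuous in $b$; this holds anyway, since $Q^*$ is a uniform limit of finite-horizon PWLC functions. Then the sets $U_a=\{b: Q^*(b,a)>Q^*(b,a')\ \forall a'\neq a\}$ are open, pairwise disjoint, and cover the connected simplex. So exactly one is nonempty: a single action $a_0$ is optimal at every belief. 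Hence $\pi^*$ is open-loop and $Z^*(b)=\langle\Psi^{a_0},b\rangle$, where $\Psi^{a_0}$ solves $\psi^s=\mathrm{Law}(R(s,a_0))\ast\sum_{s'}T(s,a_0,s')(f_\gamma)_\#\psi^{s'}$. This is a one-node policy graph. It makes your first approach rigorous, at the cost of showing the statement is nearly vacuous.

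\textbf{A separate issue in your verification step.} Be explicit about which operator you mean. The identity $\widetilde{\mathcal{T}}^*_{PO}Z_{n-1}(b)=\max_{a,\sigma}\langle\Psi^{a,\sigma},b\rangle$ holds only if the reward is coupled to the same hidden-state path as the continuation. Cancelling the belief normalizer controls only the conditional law of the continuation given $o'$, not its joint law with $R$. Suppose $R$ and $Z(B')$ are drawn independently given $B'$, or $R$ is replaced by the expected reward $R(b,a)$ as in the proof of Theorem~\ref{thrm:pod_operator}. Then the resulting law is generally not affine in $b$. This already happens with two states, one action, identity transitions, uninformative observations and state-dependent rewards, where the law is quadratic in $b$. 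In that case no finite $\psi$-vector representation can match it.
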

\noindent\textit{See Appendix~\ref{appendix_sec:risk_neutral_psi_pwlc} for proof.}

By Theorem~\ref{thrm:risk_neutral_psi_pwlc}, $Z^*$ inherits the PWLC structure of classical POMDPs, but within the Wasserstein metric space. Crucially, this representation remains finite in the risk-neutral case, ensuring computational tractability---just as a finite set of $\alpha$-vectors suffices to represent the classical optimal value function $V^*$.

\section{Distributional Point-Based Value Iteration}
\label{sec:dpbvi}

Building upon the foundations of PBVI and DistRL in the partially observable setting, we now introduce \emph{Distributional Point-Based Value (DPBVI)}. Conceptually, DPBVI is PBVI with $\widetilde{\mathcal{T}}^*_{PO}$ in place of $\mathcal{T}^*_{PO}$. As in standard PBVI, we retain point-based belief expansion \citep{pbvi}, but our backup step updates a finite set of $\psi$-vectors rather than $\alpha$-vectors.

Recall that in PBVI, we maintain a finite set of belief points $\mathcal{B}\subset \Delta$, where $\Delta$ is the probability simplex over states $\mathcal{S}$. Each backup step creates new $\alpha$-vectors to approximate the PWLC value function, then selects one $\alpha$-vector per belief. DPBVI extends this idea to the distributional setting by replacing $\alpha$-vectors with $\psi$-vectors, where each $\psi$-vector stores one distribution per state for a given action.

By Theorem~\ref{thrm:risk_neutral_psi_pwlc}, a finite set of these $\psi$-vectors suffices to represent the optimal distributional value function $Z^*$ in the risk-neutral setting, just as $\alpha$-vectors suffice in classical POMDPs. We represent return distributions using the categorical distribution parameterization, which is mean-preserving when its support spans all possible returns \citep{dist_rl_book}. This ensures that the expected returns align with the classical POMDP value function, preserving the key convergence guarantees from Corollary~\ref{coro:pod_opt_convergence} while operating in the space of return distributions.

\subsection{The DBPVI Backup}

Given the previous set of $\psi$-vectors $\Gamma^{t-1}$, each DPBVI backup follows four steps: 1.\ action-observation projection, 2.\ categorical distribution projection, 3.\ belief-specific maximization, and 4.\ candidate $\psi$-vector selection.

    We create candidate $\psi$-vectors by combining the immediate rewards $\widetilde{R}_a$ with the next-step distributions of returns, conditioned on actions $a$ and observations $o'$. Formally,
    \begin{equation}
        \Gamma_a^{o',t} \;\leftarrow\; \Bigl\{\widetilde{R}_a + \gamma \,T(\cdot, a, s') \,\Omega(o', s', a)\,\psi^{s'} \,\Big\vert\; \Psi \in \Gamma^{t-1},\,s' \in \mathcal{S}\Bigr\}.
    \end{equation}
    This step yields $|\mathcal{A}| \times |\mathcal{O}| \times |\Gamma^{t-1}|$ projected distributions, denoted by $\Gamma^{,t}$. Because each distribution reflects one action-observation pair, it may be \emph{subnormalized} until we later sum over observations.

    
    The categorical distribution representation is not closed under $\widetilde{\mathcal{T}}^*_{PO}$ due to the $\gamma$-scaling and immediate reward addition operations on prior $\psi$-vectors. This means the subnormalized distributions in $\Gamma^{,t}$ may have heterogeneous supports. To ensure each $\psi$-vector has the same support, we apply the categorical projection operator $\Pi_c$
    $$\Gamma^{,t} \leftarrow \{\Pi_c(\Gamma_a^{o',t}) \mid \Gamma_a^{o',t} \in \Gamma^{,t}\}.$$

    For each belief $b \in \mathcal{B}$, we choose the combination of $\psi$-vectors across observations that yields the highest expected distribution (under the risk-neutral lens). Symbolically, for a single action $a$,
    \begin{equation}
        \Gamma_{a}^{b,t} \;=\; \sum_{o' \in \mathcal{O}} \argmax_{\Psi \in \Gamma_a^{o',t}}\bigl\langle \Psi, b \bigr\rangle.
    \end{equation}
    This step mirrors the cross-sum and maximize operation in PBVI, except each summand is a distributional mixture rather than a scalar $\alpha-$vector. Summing those best subnormalized distributions across all observations recovers a full next-step return distribution.

    Finally, for each belief $b$, we pick the best action's distribution by
    \begin{equation}
        \Psi_b^t
        \;\leftarrow\;
        \argmax_{\Gamma_{a}^{b,t},\, a \in \mathcal{A}}
        \bigl\langle \Gamma_{a}^{b,t}, b \bigr\rangle.
    \end{equation}
    Collecting $\{\psi_b^t\}_{b \in \mathcal{B}}$ yields our updated set $\Gamma^t$. As in PBVI, we keep at most $|\mathcal{B}|$ vectors in $\Gamma^t$, thus maintaining a point-based approximation.

    Because we manage distributions instead of scalars, each backup is slower by a factor of $|M|$, where $|M|$ is the number of bins in each categorical distribution. Concretely, one DPBVI backup runs in $\mathcal{O}(|\mathcal{S}|\;|\mathcal{A}|\;|\Gamma^{t-1}|\;|\mathcal{O}|\;|\mathcal{B}|\;|M|)$, compared to $\mathcal{O}(|\mathcal{S}|\;|\mathcal{A}|\;|V_{t-1}|\;|\mathcal{O}|\;|\mathcal{B}|)$ for PBVI. Nonetheless, DPBVI preserves the core PBVI structure---belief set, backup iteration, and finite policy representation---while enabling distributional modeling of returns under partial observability.

\section{Experimental Results}
\label{sec:experiments}

We empirically validate that DPBVI recovers the same value function as PBVI under risk-neutral objectives, as predicted by theory. Our focus is not on performance or efficiency, but on whether distributional backup operations preserve correctness relative to standard scalar PBVI.

We evaluate DPBVI and PBVI in two small POMDP domains: (1) MiniGrid DoorKey (5×5), where the agent must reach a goal behind a locked door using a key, and (2) a Two-State Noisy-Sensor domain with binary state transitions and stochastic observations. See Appendix~\ref{appendix_sec:experiments} for full environment and implementation details.

Table~\ref{tab:results} reports runtime and iteration counts for both methods. As expected, DPBVI incurs greater computational cost due to distributional operations but converges in the same number of iterations as PBVI. To assess value function agreement, we compare the expected values of DPBVI’s $\psi$-vectors with PBVI’s scalar values across belief points. Figure~\ref{fig:rel_error_1e-3} shows the maximum relative error per iteration. Errors remain very small, confirming that DPBVI tracks PBVI’s value function closely.

We further test for compounding error by tightening the convergence threshold in the Two-State Noisy-Sensor environment (Figure~\ref{fig:rel_error_1e-6}). The relative error initially increases, then stabilizes and decreases, suggesting discrepancies stem from categorical projection, floating-point limits, and bootstrapping rather than theoretical flaws. These findings confirm that DPBVI is a correct extension of PBVI under risk-neutral assumptions and establish a foundation for future risk-sensitive variants.

\section{Conclusion}

In this paper, we presented the first comprehensive study of Distributional Reinforcement Learning in partially observable domains. Our main theoretical result showed that the partially observable distributional Bellman operator $\widetilde{\mathcal{T}}_{PO}^\pi$ and its optimality counterpart $\widetilde{\mathcal{T}}^*_{PO}$ are $\gamma$-contractions in the supremum $p$-Wasserstein metric ($1 \le p < \infty$). These results extend the well-known contraction properties of distributional Bellman operators in fully observable settings to POMDPs. 

As part of our analysis, we introduced $\psi$-vectors as the distributional analogs of $\alpha$-vectors, proving that under risk-neutral control, the optimal distributional value function in a POMDP can be represented by a finite collection of these $\psi$-vectors while preserving PWLC in the Wasserstein metric space. This representation underlines how a distributional perspective naturally generalizes classical POMDP theory.

Building on these foundations, we described DPBVI, a risk-neutral algorithm that applies the distributional Bellman optimality operator to a finite set of belief points. This extension parallels the classical PBVI procedure yet replaces scalar $\alpha$-vectors with distributional $\psi$-vectors, thereby capturing richer information about uncertainty in returns. Experiments (see Section~\ref{sec:experiments} and Appendix~\ref{appendix_sec:experiments}) show that DPBVI recovers the same solution as PBVI under risk-neutral control, confirming that the distributional approach does not compromise solution quality in the risk-neutral regime. 

While our empirical and theoretical results focus solely on the risk-neutral setting, this work enables future exploration of risk-sensitive approaches by extending the distributional perspective to partially observable domains. Capturing full return distributions under uncertainty lays the groundwork for future work on risk-sensitive control. We hope these developments foster further exploration of DistRL in practical domains where both state uncertainty and variability in outcomes are central concerns. 

That said, while our experiments validate theoretical correctness, they are limited to small synthetic environments and do not assess scalability or empirical performance in large-scale settings. Future work may explore practical extensions of DPBVI to high-dimensional problems and risk-sensitive criteria.

\appendix

\section*{Acknowledgments}
\label{sec:ack}
The authors thank Ankur Teredesai and Kevin Jamieson for their helpful feedback on early drafts of this work. Muhammad Aurangzeb Ahmad's work was supported by the National Institutes of Health, USA (NIH grant T32 GM 121290).

\section{Proofs}
\label{appendix_sec:proofs}

\subsection{Proof of Theorem~\ref{thrm:pod_operator}}
\label{appendix_sec:pod_operator_proof}

\evalOpTheorem*

\begin{proof}
    We prove contraction of the partially observable distributional Bellman operator $\widetilde{\mathcal{T}}^\pi_{PO}$ by lifting the POMDP $\mathcal{M}$ to its equivalent fully observable belief MDP $\widetilde{\mathcal{M}}$, where states are beliefs $b \in \Delta$, actions are $a \in \mathcal{A}$, and transitions are governed by the belief update $\tau(b, a, o')$ with probability 
\[
P(o' \mid b, a) = \sum_{s, s'} b(s) T(s, a, s') \Omega(o' \mid s', a).
\]

    Assume the following:
    \begin{itemize}
        \item $\Delta$ is a Borel space (as it is a subset of a finite-dimensional simplex).
        \item $\mathcal{A}$ is finite.
        \item The reward function $R(b, a) := \mathbb{E}_{s \sim b}[R(s, a)]$ is bounded.
        \item The belief transition kernel $\tau(\cdot \mid b, a)$ is measurable.
        \item The policy $\pi: \Delta \to \mathcal{A}$ is fixed and measurable.
    \end{itemize}
    
    Let $\eta, \eta' \in \mathfrak{P}_p(\mathbb{R})^\Delta$ be two distribution-valued value functions.
    Define the supremum $p$-Wasserstein metric as:
    \[
    \bar{w}_p(\eta, \eta') := \sup_{b \in \Delta} W_p(\eta(b), \eta'(b)), \quad \forall p \in [1, \infty).
    \]
    
    Fix $b \in \Delta$ and let $(G_b, G_b')$ be an optimal coupling of $\eta(b)$ and $\eta'(b)$. 
    Define:
    \[
    \tilde{G}_b := R(b, \pi(b)) + \gamma G_{b'}, \quad 
    \tilde{G}_b' := R(b, \pi(b)) + \gamma G'_{b'},
    \]
    where $b' \sim \tau(\cdot \mid b, \pi(b))$ is the next belief under the belief transition kernel.
    
    Then:
    \[
    W_p(\widetilde{\mathcal{T}}^\pi_{PO}\eta(b), \widetilde{\mathcal{T}}^\pi_{PO}\eta'(b))
    \leq \mathbb{E}_{b'} \left[ W_p(\eta(b'), \eta'(b')) \right]
    \leq \gamma \bar{w}_p(\eta, \eta').
    \]
    
    Taking the supremum over $b \in \Delta$, we conclude:
    \[
    \bar{w}_p\left(\widetilde{\mathcal{T}}^\pi_{PO} \eta, \widetilde{\mathcal{T}}^\pi_{PO} \eta'\right)
    \leq \gamma \bar{w}_p(\eta, \eta'),
    \]
    as desired.
\end{proof}
    $$
    $$

\subsection{Proof of Theorem~\ref{thrm:pod_optimality_operator}}
\label{appendix_sec:pod_optimality_operator_proof}

\optOpTheorem*

\begin{proof}
We lift the POMDP \(\mathcal{M}\) to its equivalent belief MDP \(\widetilde{\mathcal{M}}\), where each belief \(b \in \Delta\) serves as a fully observable state. The transition dynamics \(\tau(b, a, o')\) of the belief MDP are Markovian, and the reward function is defined as \(\mathcal{R}(b, a) := \mathbb{E}_{s \sim b}[\mathcal{R}(s, a)]\), which is bounded by assumption. 

From this point forward, we use $b \in \Delta$ to denote the state of the belief MDP. We assume the following conditions hold in the belief MDP:
\begin{itemize}
    \item The belief state space $\Delta$ is compact and convex
    \item The reward function $\mathcal{R}(b, a)$ is continuous in $b\; \forall a \in \mathcal{A}$
    \item The belief update function $\tau(b, a, o')$ is continuous in $b$ for all $a \in \mathcal{A}$ and $o' \in \mathcal{O}$
    \item The action space $\mathcal{A}$ is finite
    \item The policy space $\Pi$ is fixed and measurable
    \item There exists a unique optimal policy $\pi^*$
\end{itemize}

Let \(\eta^{\pi^*}\) denote the return distribution induced by the optimal policy. The partially observable distributional Bellman optimality operator \(\widetilde{\mathcal{T}}^*_{PO}\) can be interpreted as a greedy update rule over return distributions in the belief MDP.

Define $Q_\eta(b, a) := \mathbb{E}_{Z \sim \eta(b, a)} [Z]$ as the expected return under the return distribution $\eta$.
The \emph{action gap} at state $b \in \Delta$ is defined as
$$\text{GAP}(Q_\eta, b) := \min\{ Q_\eta(b, a^*) - Q_\eta(b, a) : a^*, a \in \mathcal{A}, a^* \neq a, Q_\eta(b, a^*) := \max_{a' \in \mathcal{A}} Q_\eta(b, a') \}$$
The global action gap is then defined as
$$\text{GAP} := \inf_{b \in \Delta}\text{GAP}(Q_\eta, b)$$

We now justify that \(Q_\eta(b, a)\) is continuous in \(b\) for all \(a \in \mathcal{A}\). The return distribution $\eta(b, a)$ is defined recursively from the reward distribution and belief transitions. Under our assumptions that the reward function $\mathcal{R}(b, a)$ and belief update $\tau(b, a, o')$ are continuous in b, it follows that \(Q_\eta(b, a) = \mathbb{E}_{Z \sim \eta(b, a)}[Z]\) is continuous in $b$ as well. Since the action space is finite, the maximum and second maximum of $\{Q_\eta(b, a)\}_{a \in \mathcal{A}}$ are continuous in $b$, so $\text{GAP}(Q_\eta, b)$ is also continuous in $b$.

By the Extreme Value Theorem, since $\text{GAP}(Q_\eta, b)$ is continuous on the compact state space $\Delta$, the infimum is attained and equal to the minimum. Since the optimal policy is unique, $\text{GAP}(Q_\eta, b) > 0 \; \forall b \in \Delta$, thus $\text{GAP} > 0$.

Fix $\epsilon = \dfrac{1}{2}\text{GAP}(Q^*)$. The standard Bellman optimality operator is known to be a $\gamma$-contraction under the $L^\infty$ norm. Consequently, $\exists k \in \mathbb{N}$ such that
$$||Q_{\eta_k} - Q^*||_\infty < \epsilon \qquad \forall k \geq K$$

For any fixed $b$, let $a^*$ be the optimal action in that state. Then for any $a \neq a^*$, we have that
$$
\begin{aligned}
    Q_{\eta_k}(b, a^*) &\geq Q^*(b, a^*) - \epsilon\\
    &\geq Q^*(b, a) + \text{GAP}(Q^*) - \epsilon\\
    &> Q_{\eta_k}(b, a) + \text{GAP}(Q^*) - 2\epsilon\\
    &= Q_{\eta_k}(b, a)
\end{aligned}
$$
Thus, any greedy selection rule applied to $\eta_k$ will yield the optimal policy $\pi^* \; \forall k \geq K$. From this point onward, the Bellman updates correspond to evaluation under a fixed policy $\pi^*$, and we may invoke Theorem~\ref{thrm:pod_operator} with the initial return distributions $\eta_0 = \eta_k$ to conclude that $\eta_k \rightarrow \eta^{\pi^*}$. Hence, $\widetilde{\mathcal{T}}^*_{PO}$ is a $\gamma$-contraction and converges to the unique fixed point under the supremum $p$-Wasserstein metric.


\end{proof}


\subsection{Proof of Theorem~\ref{thrm:risk_neutral_psi_pwlc}}
\label{appendix_sec:risk_neutral_psi_pwlc}

\psiPWLC*
    
\begin{proof}
    By corollary~\ref{coro:pod_opt_convergence}, there exists a unique optimal distributional value function $Z^*$. Let $\Gamma^* = \{ \Psi^*_a \mid \forall a \in \mathcal{A} \}$. Then for any belief $b$,
    $$
    \begin{aligned}
        Z^*(b) &= \max_{a \in \mathcal{A}}\max_{\Psi \in \Psi^*_a} \langle \Psi, b \rangle \\
        &\overset{(a)}{=} \max_{\Psi \in \Gamma^*} \langle \Psi, b \rangle \\
        &\overset{(b)}{=} \max_{\alpha \in \Gamma^*} \alpha \cdot b \\
        &\overset{(c)}{=} V^*(b)
    \end{aligned}
    $$
    (a) follows from the definition of $\Gamma^*$ as the union of $\Psi_a$ for all $a$, (b) because of risk-neutrality, $\max_{\Psi \in \Gamma^*} \langle \Psi, b \rangle$ is determined by the maximum expectation. That is, $\argmax_{\Psi} \langle \Psi, b \rangle = \argmax_{\Psi} \mathbb{E}[\langle \Psi, b \rangle]$, where $\alpha = \mathbb{E}[\Psi]$. (c) follows since $\alpha \in \Gamma^*$ precisely matches the expected returns that define the classical optimal value function $V^*$ in a POMDP. Finally, by Sondik's result \citep{sondik}, $V^*$ is PWLC and has finite representation. Therefore, $Z^*$ is PWLC and may be represented by a finite set of $\psi$-vectors.
\end{proof}

\section{Detailed Experimental Results}
\label{appendix_sec:experiments}

Our primary goal in these experiments is to verify that DPBVI converges to the same solution as PBVI under risk-neutral objectives, rather than to improve upon the performance or efficiency of PBVI. Indeed, DPBVI is slower due to the additional cost of handling distributions, but it offers a framework that can be extended to risk-sensitive criteria in future work.

\subsection{Environments}
\label{appendix_sec:envs}

We evaluate on two small POMDP environments, each with its own dynamics models:

\paragraph{MiniGrid DoorKey (5x5)} This environment, adapted from the MiniGrid suite \citep[MiniGrid-DoorKey-5x5-v0]{minigrid}, places an agent in a 5x5 grid containing a locked door and a key. The agent's objective is to reach a goal state behind a locked door requiring a key. We assume both perfect sensor and transition models, a discount factor $\gamma = 0.9$, a reward of $1.0$ upon reaching the goal state, and a known start state. Because the start state is known and the dynamics models are perfect, we set $\mathcal{B}$ to include complete confidence in being in one of the ten states along the optimal path (i.e., $|\mathcal{B}| = 10$).

\paragraph{Two-State Noisy-Sensor} This is a simple domain with two states, $\{ s_0, s_1 \}$, and a noisy sensor adapted from the environment in \citet[Section 17.5]{ai_modern_approach}. The two actions are \emph{Go} and \emph{Stay}. \emph{Go} changes states with probability $0.9$ and \emph{Stay} stays in the same state with probability $0.9$. The agent received a reward of $1.0$ each timestep it is in stay $s_1$ and $0$ otherwise. With probability $0.6$ the sensor reports the correct state. We set the discount factor $\gamma = 0.99$ and $\mathcal{B}$ to include twenty evenly spaced beliefs between complete belief of being in either state (i.e., $|\mathcal{B}| = 20$).

\subsection{Setup}
\label{appendix_sec:experiment_setup}

We disable belief-point expansion in both environments and fix a finite set $\mathcal{B}$ of belief states as described in Section \ref{appendix_sec:envs}. PBVI is implemented as in \citep{pbvi}, while DPBVI follows the backup procedure from Section \ref{sec:dpbvi}. Return distributions in DPBVI use 51 categorical atoms, with each $\psi$-vector initialized to place all probability mass at zero.

For the MiniGrid DoorKey environment, we set the distribution support to $[0, 5]$, reflecting the small range of cumulative rewards in a 5×5 grid. In the Two-State Noisy-Sensor environment, we adopt a larger support $[0, 100]$ to accommodate higher returns. Upon convergence, we compare the expected values of DPBVI’s distributions against the scalar value function from PBVI, using identical discount factors, reward functions, and stopping criteria in both algorithms.

\subsection{Results and Discussion}

Table \ref{tab:results} compares the average runtimes and the number of iterations required for DPBVI and PBVI to converge on the MiniGrid DoorKey and Two-State environments. In both domains, DPBVI recovers the same optimal risk‐neutral solution as PBVI, though with significantly higher runtime due to the additional cost of distributional operations.

\begin{table}[t]
\caption{Results of DPBVI and PBVI: average runtime and iteration counts for convergence criterion $\epsilon = 1\mathrm{e}{-3}$.}
\label{tab:results}
\centering
\begin{tabular}{|c|c|c|c|}
    \hline
    \textbf{Environment} & \textbf{Algorithm} & \textbf{Avg. Runtime (s)} & \textbf{\# Iterations} \\ \hline
    MiniGrid DoorKey & PBVI & 0.014 & 11 \\ \hline
    & DPBVI & 54.805 & 11 \\ \hline
    Two-State & PBVI & 0.083 & 789 \\ \hline
    & DPBVI & 0.397 & 789 \\ \hline
    \end{tabular}
\end{table}


To validate that DPBVI converges to the same value function as PBVI, we examine the maximum relative error between their value functions across belief points over successive backups. These results are shown in Figure~\ref{fig:rel_error_1e-3} for both environments with a convergence criterion of $\epsilon = 1\mathrm{e}{-3}$.

\begin{figure}[t]
    \centering
    \begin{subfigure}[b]{0.45\textwidth}
        \includegraphics[width=\textwidth]{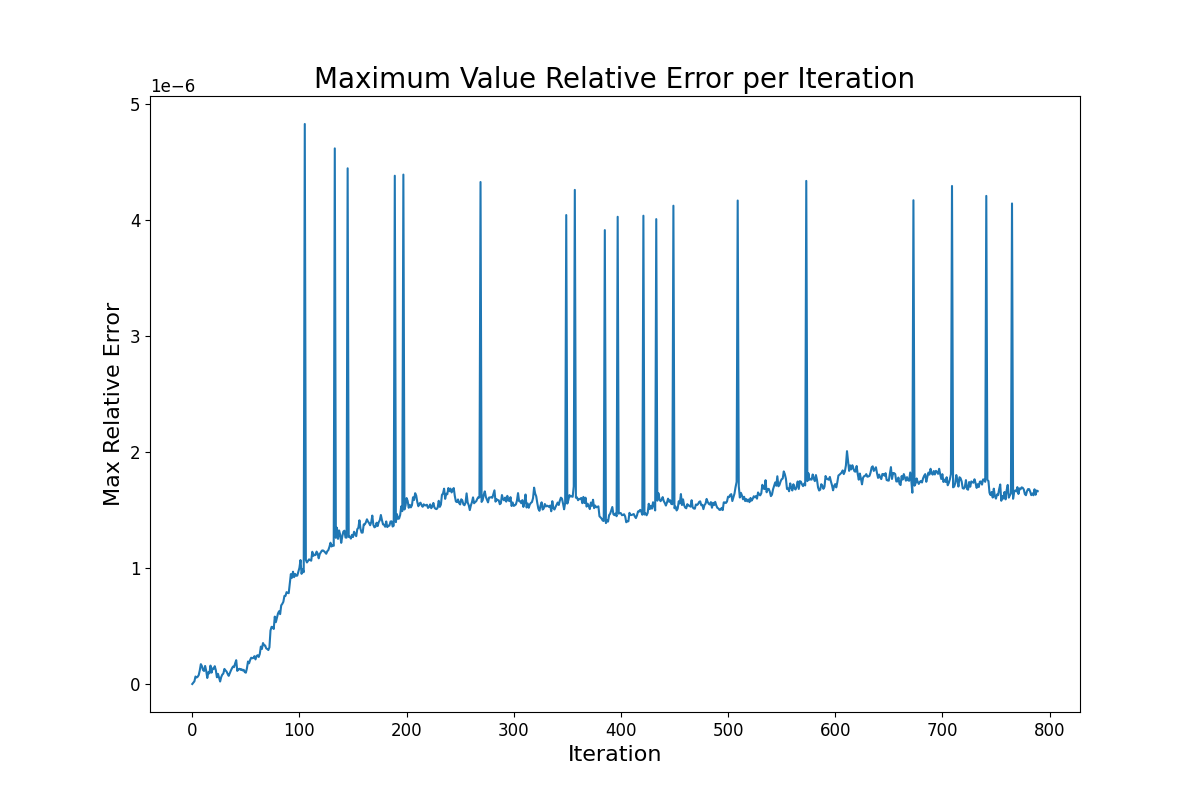}
        \caption{Two-State Noisy-Sensor Environment}
        \label{fig:rel_error_trivial_1e-3} 
    \end{subfigure}
    \begin{subfigure}[b]{0.45\textwidth}
        \includegraphics[width=\textwidth]{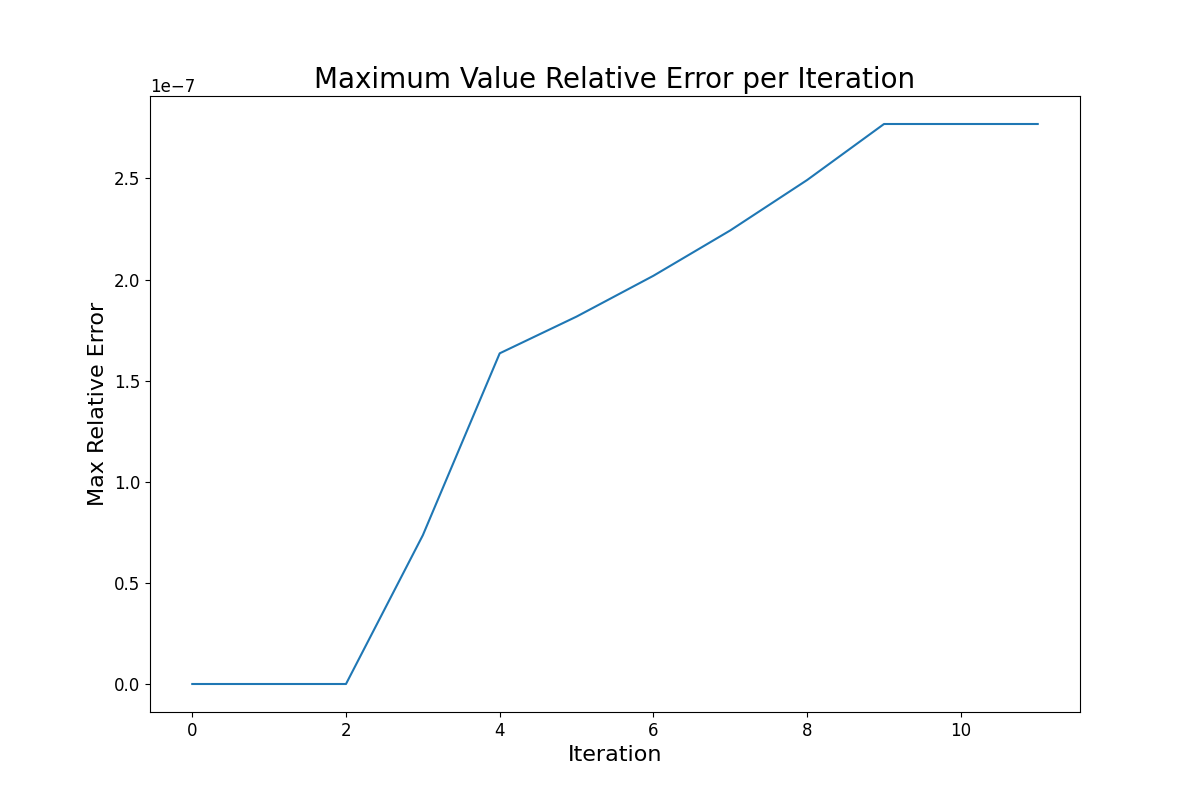}
        \caption{MiniGrid DoorKey Environment}
        \label{fig:rel_error_minigrid_1e-3} 
    \end{subfigure}
    \caption{Maximum relative error between DPBVI and PBVI value functions per iteration with convergence criterion $\epsilon = 1\mathrm{e}{-3}$.}
    \label{fig:rel_error_1e-3}
\end{figure}

In both domains, we observe very small value differences throughout the run, confirming that DPBVI effectively recovers the same solution as PBVI. In the Two-State environment (Figure~\ref{fig:rel_error_trivial_1e-3}), the relative error increases gradually and exhibits intermittent spikes. We hypothesize these spikes arise from categorical projection error, floating-point precision limits, and bootstrapping. In MiniGrid DoorKey (Figure~\ref{fig:rel_error_minigrid_1e-3}), the relative error increases slightly before plateauing around $1.2\mathrm{e}{-7}$, again validating close alignment between both methods.

To test whether the value function discrepancy is a function of the number of backups---suggesting compounding error or a design flaw in DPBVI---we tightened the convergence threshold to $\epsilon = 1\mathrm{e}{-6}$ and set the iteration limit to $10,000$ in the Two-State environment. Results are shown in Figure~\ref{fig:rel_error_1e-6}.

\begin{figure}
    \centering
    \includegraphics[width=0.7\textwidth]{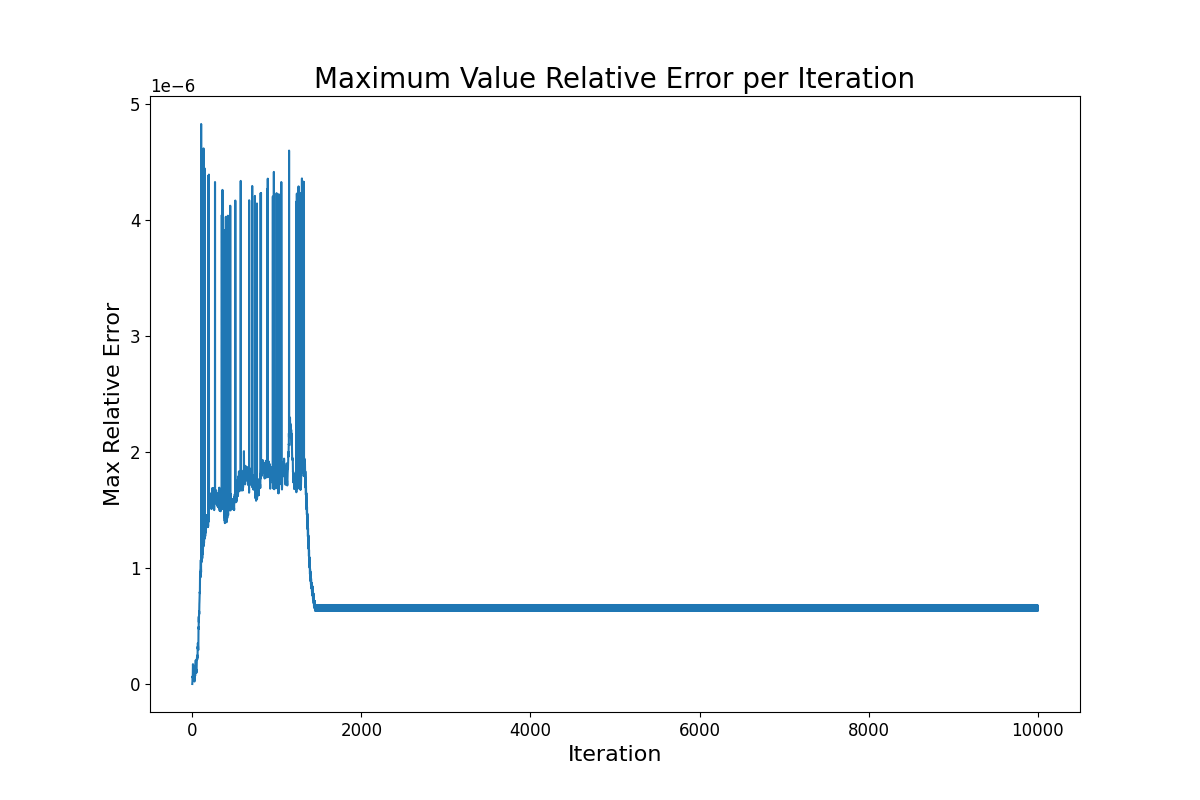}
    \caption{Maximum relative error between DPBVI and PBVI value functions with convergence threshold $\epsilon = 1\mathrm{e}{-6}$ in the Two-State Noisy-Sensor environment.}
    \label{fig:rel_error_1e-6}
\end{figure}

As shown in Figure~\ref{fig:rel_error_1e-6}, the relative error increases over approximately the first $1{,}500$ backups, peaks just below $3\mathrm{e}{-4}$, and then gradually decreases and plateaus near $5\mathrm{e}{-5}$. This behavior supports our hypothesis that the discrepancy arises from bootstrapping, floating-point error, and categorical projection—rather than compounding error over time. Notably, neither PBVI nor DPBVI fully converged under this stricter threshold, yet the error remained bounded and decreased with additional backups. The MiniGrid DoorKey results remained unchanged under this more restrictive convergence criterion.


\bibliography{main}
\bibliographystyle{rlj}


%
%

\end{document}